\newtheorem{theorem}{Theorem}[section]
\newtheorem{lemma}[theorem]{Lemma}
\newtheorem{claim}[theorem]{Claim}
\newcommand{\R}{\mathbb R}
\newcommand{\eps}{\epsilon}
\newcommand{\VC}{\text{VC}}
\newcommand{\DD}{\text{DD}}
\begin{document}

\title{Sample compression schemes for VC classes}

\author{Shay Moran\thanks{Departments of Computer Science, Technion-IIT, Israel and Max Planck Institute for Informatics, Saarbr\"{u}cken, Germany. {\tt  shaymrn@cs.technion.ac.il.}
Research is supported by ISF and BSF.}
\and Amir Yehudayoff\thanks{Department of Mathematics, Technion-IIT, Israel.
{\tt amir.yehudayoff@gmail.com.}
Horev fellow -- supported by the Taub foundation.  
Research is also supported by ISF and BSF.}}

\date{}

\maketitle

\begin{abstract}
Sample compression schemes were defined by Littlestone and Warmuth (1986)
as an abstraction of the structure underlying many learning algorithms.
Roughly speaking, a sample compression scheme of size $k$ 
means that given an arbitrary list of labeled examples, 
one can retain only $k$ of them
in a way that allows to recover the labels of all other examples in the list.
They showed that compression implies PAC learnability 
for binary-labeled classes,
and asked whether the other direction holds.
We answer their question and show that every concept class $C$
with VC dimension $d$ has a sample compression scheme of size exponential in $d$.
The proof uses an approximate minimax phenomenon
for binary matrices of low VC dimension,
which may be of interest in the context of game theory.
\end{abstract}

\section{Introduction}

Learning and compression are known to be deeply related to each other.
Learning procedures perform compression,
and compression is an evidence of and is useful in learning.
For example, support vector machines,
which are commonly applied to solve classification problems,
perform compression (see Chapter 6 in~\cite{Cristianini00a}).
Another example is the use of 
compression to boost the accuracy of learning procedures (see~\cite{littleWarm,DBLP:journals/iandc/Freund95} and Chapter 4 in~\cite{schapire2012boosting}).

About thirty years ago, Littlestone and Warmuth~\cite{littleWarm} 
provided a mathematical framework for studying compression
in the context of learning theory.
In a nutshell, they showed that compression indeed implies learnability
and asked whether learnability implies compression.

\subsection{Learning}

Here we provide a brief description of standard learning terminology. 
For more information, see the books \cite{KearnsVazirani94,
schapire2012boosting,Cristianini00a}.

Imagine a student who wishes to learn a concept
$c : X \to \{0,1\}$
by observing some training examples.
In order to eliminate measurability issues, 
we focus on the case that $X$ is a finite or countable set
(although the arguments we use are more general).
The high level goal of the student is to come up with an hypothesis
$h : X \to \{0,1\}$ that is close to the unknown concept $c$ 
using the least number of training examples.
There are many possible ways to formally 
define the student's objective.
An important one is 
Valiant's probably approximately correct (PAC) learning model~\cite{zbMATH03943062},
which is closely related to an earlier work of Vapnik and Chervonenkis~\cite{zbMATH03391742}.
This model is defined as follows.

The training examples are modeled as a pair
$(Y,y)$ where $Y \subseteq X$ is the multiset of points
the student observes and $y = c|_Y$ is their labels according to $c$.
The collection of all possible training examples
is defined as follows.
Let $C\subseteq \{0,1\}^X$ be a concept class.
A $C$-labeled sample is a pair $(Y,y)$,
where $Y \subseteq X$ is a multiset and $y = c|_Y$ for some $c \in C$. 
The size of a labeled sample $(Y,y)$ is 
the size of $Y$ as a multiset.
For an integer $k$, 
denote by $L_C(k)$ the set of $C$-labeled samples 
of size at most $k$.
Denote by $L_C(\infty)$ the set of all $C$-labeled samples of finite size.

The concept class $C$ is PAC learnable 
with $d$ samples, generalization error $\eps$,
and probability of success $1-\delta$ 
if there is a learning map $H:L_C(d) \to \{0,1\}^X$ so that
the hypothesis $H$ generates is accurate with high probability.
Formally,
for every $c \in C$ and for every probability distribution $\mu$ on $X$,
$$\Pr_{\mu^d} \Big[ \left\{ Y \in X^d : \mu(\{x \in X : h_Y(x) \neq c(x)\}) \leq \eps \right\} \Big] \geq 1-\delta ,$$
where $h_Y = H(Y,c|_Y)$.
In this text, when the parameters $\eps,\delta$ are not explicitly stated
we mean that their value is $1/3$.
If the image of $H$ is contained in $C$,
we say that $C$ is properly PAC learnable.

A fundamental question that emerges is
characterizing the sample complexity of PAC learning.
The work of Blumer, Eherenfeucht, Haussler, and Warmuth~\cite{zbMATH04143473},
which is based on~\cite{zbMATH03391742},
provides such a characterization.
The characterization is based on the Vapnik-Chervonenkis (VC) dimension of $C$,
which is defined as follows.
A set $Y \subseteq X$ is $C$-shattered if
for every $Z \subseteq Y$ there is $c \in C$ so that $c(x)=1$ for all $x \in Z$ and $c(x)=0$ for all $x \in Y-Z$.
The VC dimension of $C$, denoted $\VC(C)$, is the 
maximum size of a $C$-shattered set (it may be infinite).
They proved that the sample complexity of PAC learning
$C$ is $\VC(C)$, up to constant factors\footnote{Big $O$ and $\Omega$ notation means up to absolute constants.}.

\begin{theorem}[Sample complexity of PAC learning~\cite{zbMATH03391742,zbMATH04143473}]
\label{thm:BlumerPAC}
If $C \subseteq \{0,1\}^X$ has VC dimension $d$,
then $C$ is properly PAC learnable with 
$O((d\log (2/\eps)+\log(2/\delta))/\eps)$ samples,
generalization error $\eps$ and success probability $1-\delta$.
\end{theorem}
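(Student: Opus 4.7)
The plan is to prove the theorem via empirical risk minimization (ERM), which simultaneously yields \emph{proper} PAC learnability. Define the learner $H$ on a sample $(Y,c|_Y) \in L_C(m)$ to output any hypothesis $h_Y \in C$ that is consistent with the sample (one exists because $c$ itself is consistent). The theorem then reduces to a uniform convergence statement: with probability at least $1-\delta$ over $Y \sim \mu^m$, every $h \in C$ agreeing with $c$ on $Y$ satisfies $\mu(\{x : h(x) \neq c(x)\}) \leq \eps$, for $m = O((d\log(2/\eps)+\log(2/\delta))/\eps)$.

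To establish this, I would pass to the family of error sets $\mathcal{B} = \{\{x : h(x) \neq c(x)\} : h \in C\}$. Since XOR-ing with the fixed function $c$ is a bijection on $\{0,1\}^X$, one has $\VC(\mathcal{B}) \leq \VC(C) = d$. The required uniform convergence is equivalent to the assertion that $Y$ is an \emph{$\eps$-net} for $\mathcal{B}$ with respect to $\mu$: it meets every $B \in \mathcal{B}$ with $\mu(B) \geq \eps$.

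I would prove the $\eps$-net bound via the classical double-sampling argument of Vapnik and Chervonenkis. Draw an independent ghost sample $Y'$ of size $m$; by a Chebyshev-type bound on $|Y' \cap B|$, the event ``some heavy $B$ is missed by $Y$'' is dominated, up to a factor of $2$, by the event ``some $B$ is missed by $Y$ yet hits $Y'$ in at least $\eps m / 2$ points.'' Conditioning on the multiset $Z = Y \cup Y'$, the pair $(Y,Y')$ becomes a uniformly random balanced partition of $Z$, so for each fixed $B$ the probability that all $\geq \eps m/2$ elements of $B \cap Z$ land in $Y'$ is at most $2^{-\eps m/2}$. By the Sauer-Shelah lemma, the number of distinct traces $\{B \cap Z : B \in \mathcal{B}\}$ is at most $(2em/d)^d$, so a union bound gives failure probability at most $2(2em/d)^d \cdot 2^{-\eps m/2}$, which is $\leq \delta$ for the claimed $m$. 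The main obstacle is executing the symmetrization step cleanly---justifying the reduction to the ghost-sample event and the conditioning on $Z$---after which the interplay between polynomial shatter growth and exponential decay pins down the sample size.
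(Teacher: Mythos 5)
Your proof is correct and is essentially the classical argument of Vapnik--Chervonenkis and Blumer et al.\ (consistent ERM plus the $\eps$-net/double-sampling bound with Sauer--Shelah), which is exactly the route taken in the references cited for this theorem; the paper itself imports the statement as a black box and gives no proof. The only points to execute carefully are the ones you already flag: the symmetrization step needs $\eps m \geq \Omega(1)$ so the ghost sample hits a heavy set $\geq \eps m/2$ times with probability at least $1/2$, and the final calculation must absorb the $d\log(2em/d)$ term (including its dependence on $\log(1/\delta)$) into $\eps m/2$, both of which hold for the claimed $m = O((d\log(2/\eps)+\log(2/\delta))/\eps)$.
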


\subsection{Compression}
Littlestone and Warmuth~\cite{littleWarm} 
defined sample compression schemes as a natural abstraction
that captures a common property of many learning procedures,
like procedures for learning geometric shapes or algebraic structures
(see also~\cite{DBLP:conf/colt/Floyd89,DBLP:journals/ml/FloydW95}).

\paragraph{Definition.}
A sample compression scheme takes a long list of samples
and compresses it to a short sub-list of samples
in a way that allows to invert the compression.
Formally, a sample compression scheme for $C$ with kernel size $k$
and side information $I$, where $I$ is a finite set, consists of two maps $\kappa,\rho$ for which the following hold:

\begin{description}

\item[(${\kappa}$)] 
The {\em compression map}
$$\kappa: L_C(\infty) \to L_C(k) \times I$$ takes
$(Y,y)$ to $((Z,z),i)$ with $Z \subseteq Y$
and $z = y|_Z$.

\item[($\rho$)] The {\em reconstruction map}
$$\rho : L_C(k) \times I \to \{0,1\}^X$$
is so that for all $(Y,y)$ in $L_C(\infty)$,
$$\rho(\kappa(Y,y))|_Y = y.$$
\end{description}
The size of the scheme is\footnote{Logarithms in this text
are base $2$.} $k + \log (|I|)$.
In the language of coding theory,
the side information $I$
can be thought of as list decoding;
the map $\rho$ has a short list of possible reconstructions of a given $(Z,z)$,
and the information $i \in I$ indicates
which element in the list is the correct one.
See~\cite{DBLP:conf/colt/Floyd89,DBLP:journals/ml/FloydW95,MSWY15} for more discussions
of this definition, and some insightful examples.

\paragraph{Motivation and background.}
Littlestone and Warmuth 
showed that every compression scheme yields a natural learning procedure:
Given a labeled sample $(Y,y)$, the learner
compresses it to $\kappa(Y,y)$ and outputs the hypothesis $h = \rho(\kappa(Y,y))$.
They proved that this is indeed a PAC learner.

\begin{theorem}[Compression implies learnability \cite{littleWarm}]
\label{thm:LWPAC}
%There is a constant $\alpha > 0$ for which the following holds.
Let $C \subseteq \{0,1\}^X$, and
let $\kappa,\rho$ be a sample compression scheme
for $C$ of size $k$.
Let $d \geq 8 \big(k\log(2/\eps)+\log(1/\delta)\big)/\eps$. 
Then, the learning map $H: L_C(d) \to \{0,1\}^X$ 
defined by $H(Y,y) = \rho(\kappa(Y,y))$
is PAC learning $C$ with $d$ samples, 
generalization error $\eps$ and success probability $1-\delta$.
\end{theorem}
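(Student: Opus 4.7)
The plan is to run the standard ``union bound over hypotheses reachable by the compression map'' argument. Fix a target concept $c \in C$ and a distribution $\mu$, and draw $Y = (x_1,\dots,x_d)$ i.i.d.\ from $\mu$. By the reconstruction property of $(\kappa,\rho)$, the output $h = \rho(\kappa(Y, c|_Y))$ is automatically consistent with $c$ on all of $Y$, so the learner fails only when this particular $h$ has true error greater than $\eps$. I would control this by enumerating every hypothesis that $\rho$ could conceivably produce from a sub-sample of $Y$ together with side information, and bounding the probability that some such candidate is simultaneously bad (error $>\eps$) and consistent with all of $Y$.

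Write the scheme as having kernel size $k_0$ and side-information set $I$ with $k_0 + \log|I| = k$. Every potential output of $\kappa$ on $Y$ is indexed by a subset $S \subseteq [d]$ with $|S|\le k_0$ (specifying which coordinates of $Y$ are retained) together with a symbol $i \in I$; the total number of such pairs is at most $\binom{d}{\le k_0}|I|\le d^{k_0}|I| \le d^k$. For each $(S,i)$ define the candidate $h_{S,i} := \rho\bigl((Y|_S, c|_{Y|_S}), i\bigr)$, which depends only on the coordinates indexed by $S$. The crux is a conditioning step: after fixing the samples $(x_j)_{j\in S}$, the hypothesis $h_{S,i}$ is deterministic while the remaining $d-|S|\ge d-k$ samples are still i.i.d.\ from $\mu$ and are independent of $h_{S,i}$. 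Hence if $h_{S,i}$ has error exceeding $\eps$, the conditional probability that it agrees with $c$ on every remaining sample is at most $(1-\eps)^{d-k}\le e^{-\eps(d-k)}$. Summing over the $\le d^k$ pairs gives
\[
\Pr[\text{learner fails}] \;\le\; d^{k}\,e^{-\eps(d-k)}.
\]

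What remains is a routine numerical check that the stated bound $d \ge 8(k\log(2/\eps)+\log(1/\delta))/\eps$ drives the right-hand side below $\delta$; after taking logarithms this reduces to a cheap inequality that uses $\log d = O(\log(k/\eps)+\log\log(1/\delta))$, with the constant $8$ chosen with slack. The only conceptually delicate point I anticipate is the conditioning argument: one must take care that $\{h_{S,i}\}_{S,i}$ is a \emph{data-independent} functional family (determined by $\rho$ and by the abstract choice of indices and side information, not by the realized values of the retained samples), so that the independence between $h_{S,i}$ and the un-retained coordinates is genuine and the union bound applies to a legitimate collection of events.
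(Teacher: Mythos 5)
Your overall strategy is exactly the one the paper sketches: enumerate the data-independent family of candidate outputs $h_{S,i}$ indexed by an index set $S\subseteq[d]$, $|S|\le k_0$, and a symbol $i\in I$; condition on $x_S$ so that $h_{S,i}$ is independent of the remaining $d-|S|$ samples; bound the probability that a bad candidate survives by $(1-\eps)^{d-|S|}$; and union bound. That part is correct and matches the paper's proof, including the point you rightly flag as delicate (that the family is indexed by abstract positions and side information, not by realized sample values).

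There is, however, a genuine problem in the last step. Your count of candidates is $\binom{d}{\le k_0}\,|I|\le d^{k}$, and you then assert that $d^{k}e^{-\eps(d-k)}\le\delta$ follows routinely from $d\ge 8\bigl(k\log(2/\eps)+\log(1/\delta)\bigr)/\eps$. It does not: taking logarithms, you need $\eps d\gtrsim k\ln d+\ln(1/\delta)$, and $k\ln d$ contains a $k\ln k$ term (since $d=\Theta(k\log(2/\eps)/\eps+\log(1/\delta)/\eps)$ grows at least linearly in $k$), whereas the hypothesis only supplies $\eps d=8k\log(2/\eps)+8\log(1/\delta)$, which is linear in $k$. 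Concretely, with $\eps=1/2$, $\delta=1/3$, $|I|=1$ and $k=k_0$ large, one has $d\approx 32k$ and $d^{k}(1-\eps)^{d-k}\approx 2^{k(\log k-26)}\gg 1$, so your bound is vacuous at the stated sample size. The fix is to use the sharper estimate $\binom{d}{\le k_0}\le (ed/k_0)^{k_0}$, so the number of candidates is $(ed/k_0)^{k_0}|I|$ and the $\log$ of the count is $k_0\log(ed/k_0)+\log|I|$ rather than $k\log d$; since $d/k_0$ is controlled by $\log(2/\eps)/\eps$ plus $\log(1/\delta)/(\eps k_0)$, this quantity is dominated by $\eps d$ with the constant $8$, and the argument closes. (One should also note that $d^{k}e^{-\eps(d-k)}$, resp.\ its sharpened version, is decreasing in $d$ beyond the threshold, so verifying the inequality at $d$ equal to the stated lower bound suffices for all larger $d$.)
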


\begin{proof}[Proof sketch.]
Let $\mu$ be a distribution on $X$, and
$x_1,\ldots,x_d$ be $d$ independent samples from $\mu$.
%Let $Y = (x_1,\ldots,x_d)$ and $y = c|_Y$.
There are $\sum_{j=0}^{k} {d \choose j}$
subsets $T$ of $[d]$ of size at most $k$.
There are $|I|$ choices for information $i \in I$.
Every fixing of $T,i$ yields a random function $h_{T,i}
= \rho((T,c|_T),i)$ that is measurable with respect to $x_T = (x_t : t \in T)$.
The random function $h_{T,i}$ is independent of $x_{[d]-T}$.
For every fixed $T,i,x_T$, 
therefore, if $\mu(\{x \in X: h_{T,i}(x) \neq c(x)\}) > \eps$
then the probability that $h_{T,i}$ agrees
with $c$ on all samples in $[d]-T$ is less than $(1-\eps)^{d-|T|}$.
The function $h$ is one of the functions in the random set $\{h_{T,i} : |T|\leq k,i\in I\}$,
and it satisfies $h|_Y = c|_Y$.
The union bound completes the proof.
\end{proof}

Littlestone and Warmuth also asked
whether the other direction holds:
{\it ``Are there concept classes with finite dimension for
which there is no scheme with bounded kernel size and bounded additional
information?''}

Further motivation for considering compression schemes comes from 
the problem of boosting a weak learner to a strong learner.
Boosting is a central theme in learning theory
that was initiated by Kearns and Valiant~\cite{Kearns88,DBLP:conf/stoc/KearnsV89}.
The boosting question, roughly speaking, is: given a learning algorithm with generalization error $0.49$, 
can we use it to get an algorithm with generalization error $\eps$ of our choice?
Theorem~\ref{thm:LWPAC} implies that if the learning algorithm
yields a sample compression scheme, then
boosting follows with a multiplicative overhead of roughly $1/\eps$
in the sample size.
In other words, efficient compression schemes immediately yield boosting.

Schapire~\cite{DBLP:journals/ml/Schapire90} and later on 
Freund~\cite{DBLP:journals/iandc/Freund95}
solved the boosting problem,
and showed how to efficiently boost the generalization error of PAC learners.
They showed that if $C$ is PAC learnable with $d$ samples
and generalization error $0.49$, then
$C$ is PAC learnable with $O(d \log^2(d/\eps) /\eps)$
samples and generalization error $\eps$
(see e.g.\ Corollary~3.3 in~\cite{DBLP:journals/iandc/Freund95}).
Interestingly, their boosting is based on a weak type of compression.
They showed  how to compress a sample of size $m$ to a sample of size roughly
$d \log m$, and that such compression already implies boosting
(see Section~\ref{sec:LiC} below for more details).

Additional motivation for studying sample compression schemes
relates to feature selection, which is about identifying meaningful
features of the underlying domain that are sufficient
for learning purposes (see e.g.~\cite{DBLP:journals/jmlr/GuyonE03}).
The existence of efficient compression schemes,
loosely speaking, shows that in any arbitrarily big data
there is a small set of features that already contains all the relevant information.
More concretely, a construction of an efficient compression scheme
provides tools that may be helpful for feature selection.

%The existence of efficient compression schemes,
%loosely speaking, shows that for classes of low VC dimension, 
%there is indeed a small set of features
%that already contains all the relevant information.
%More concretely, a construction of an efficient compression scheme
%provides tools that may be helpful for feature selection.

\paragraph{Previous constructions.}
Littlestone and Warmuth's question and variants of it lead to a rich body of work
that revealed profound properties of VC dimension and learning.
Floyd and Warmuth~\cite{DBLP:conf/colt/Floyd89,DBLP:journals/ml/FloydW95} 
constructed sample compression schemes of size $\log |C|$
for every finite concept class $C$.
They also constructed optimal compression schemes
of size $d$ for maximum classes\footnote{That is,
$C \subseteq \{0,1\}^X$ of
size $|C| = \sum_{j=0}^d {|X| \choose j}$ with $d = \VC(C)$.} of VC dimension $d$,
as a first step towards solving the general question.
As the study of sample compression schemes deepened,
many insightful and optimal schemes for special cases have been constructed:
Floyd~\cite{DBLP:conf/colt/Floyd89},
Helmbold et al.~\cite{DBLP:journals/siamcomp/HelmboldSW92},
Floyd and Warmuth~\cite{DBLP:journals/ml/FloydW95},
Ben-David and Litman~\cite{DBLP:journals/dam/Ben-DavidL98},
Chernikov and Simon~\cite{chernikovS},
Kuzmin and Warmuth~\cite{DBLP:journals/jmlr/KuzminW07},
Rubinstein et al.~\cite{DBLP:journals/jcss/RubinsteinBR09}, Rubinstein and Rubinstein~\cite{DBLP:journals/jmlr/RubinsteinR12},
Livni and Simon~\cite{DBLP:conf/colt/LivniS13} and more.
These works discovered and utilized connections
between sample compression schemes, and model theory,
topology, combinatorics, and geometry.
Finally, in our recent work with Shpilka and Wigderson~\cite{MSWY15},
we constructed sample compression schemes of size roughly 
$2^{O(d)} \cdot \log \log|C|$
for every finite concept class $C$ of VC dimension $d$.

%\new{A suggestion: Introducing the study of compression schemes for finite
%classes (i.e. the $\log|C|$ and the $\exp(d)\log\log|C|$ results) via the compactness
%theorem of Ben-David and Litman which imply that an arbitrary concept class
%has a compression scheme of size $s$ if and only if every finite projection of it has one (
%so, it is enough to consider only finite concept classes).}

\subsection{Our contribution}
\label{sec:LiC}

Our main theorem states that
VC classes have sample compression schemes of finite size.
The key property of this compression 
is that its size does not depend on the size of the given sample $(Y,y)$.

\begin{theorem}[Compression]
\label{thm:compressionVC}
If $C\subseteq \{0,1\}^X$ has VC dimension $d$,
then $C$ has a sample compression scheme of size $2^{O(d)}$.
\end{theorem}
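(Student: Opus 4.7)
The plan is to combine the PAC bound of Theorem~\ref{thm:BlumerPAC} with a minimax argument and dual-VC uniform convergence, extracting from the PAC learner a finite universal family of weak hypotheses whose pointwise majority vote reproduces the target labels.

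I start with Theorem~\ref{thm:BlumerPAC} instantiated with a small enough $\eps$ and $\delta = 1/3$, producing a proper PAC learner $H_0 : L_C(m) \to C$ with $m = O(d)$. Fix a target $c \in C$ and consider the zero-sum game whose row strategies are samples $Y \in X^m$, column strategies are points $x \in X$, and payoff is $M[Y,x] = \mathds{1}[h_Y(x) \neq c(x)]$ where $h_Y = H_0(Y, c|_Y)$. Theorem~\ref{thm:BlumerPAC} implies that for every column distribution $\mu$ the row strategy $\mu^m$ achieves expected payoff at most some $\alpha$ that can be pushed arbitrarily below $1/2$ by tuning $\eps$; I fix $\alpha < 1/6$.

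By von~Neumann's minimax theorem there is a single row distribution $p^*$ with $\Pr_{Y \sim p^*}[h_Y(x) \neq c(x)] \leq \alpha$ for \emph{every} column $x$. Equivalently, the weighted majority of $\{h_Y : Y \in \mathrm{supp}(p^*)\}$ recovers $c$ pointwise. The remaining step is to replace $p^*$ by a distribution of small finite support. The class of columns $\phi_x : Y \mapsto \mathds{1}[h_Y(x) \neq c(x)]$, indexed by $x \in X$, is essentially the dual of $C$ shifted by $c$, and so has VC dimension $2^{O(d)}$ by the standard bound on the dual of a class of primal VC dimension $d$. Hence uniform convergence yields $T = 2^{O(d)}$ such that iid draws $Y_1,\ldots,Y_T$ from $p^*$ satisfy, with positive probability, $\tfrac{1}{T}|\{i : h_{Y_i}(x) \neq c(x)\}| \leq \alpha + 1/6 < 1/2$ simultaneously for every $x \in X$. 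The unweighted majority of $h_{Y_1},\ldots,h_{Y_T}$ then equals $c$ on every point.

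The compression scheme follows. Given $(Y,y) \in L_C(\infty)$ with $y = c|_Y$, run the above construction with $X$ replaced by the finite set $Y$ and $C$ by $C|_Y$ (whose VC dimension is still at most $d$) to obtain subsamples $Y'_1,\ldots,Y'_T \subseteq Y$ of size $m$, whose learners' majority vote matches $y$ on all of $Y$. The compression map $\kappa$ outputs the concatenated multiset of $Tm = 2^{O(d)}$ labeled points, together with $O(Tm \log Tm) = 2^{O(d)}$ bits of side information encoding the partition into $T$ blocks. The reconstruction map $\rho$ applies $H_0$ to each recovered block and outputs the pointwise majority of the resulting hypotheses, which reproduces $y$ on $Y$ by construction. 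The total size is $2^{O(d)}$, as required. The main obstacle is the discretization step: the column set $X$ may be infinite, ruling out a naive union bound, and the heart of the argument is that the dual class $\{\phi_x\}$ still has VC dimension bounded as a function of $d$ alone, exponential in $d$. A secondary technical point is pushing the minimax value $\alpha$ strictly below $1/2$ by a constant margin so that the uniform-convergence slack leaves the majority vote correct; this is arranged by choosing a sufficiently small $\eps$ in Theorem~\ref{thm:BlumerPAC}, which inflates $m$ only by a constant factor.
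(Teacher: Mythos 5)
Your proposal is correct and follows essentially the same route as the paper: a proper PAC learner combined with von Neumann's minimax theorem yields a distribution over hypotheses whose weighted majority recovers the labels, and an $\eps$-approximation for the dual class (whose VC dimension is $2^{O(d)}$ by Assouad's bound) sparsifies it to $T=2^{O(d)}$ hypotheses, each encoded by its size-$O(d)$ training subsample, with side information recording the partition. The only quibble is quantitative: with $\delta=1/3$ the expected payoff of the row strategy $\mu^m$ is bounded only by $\delta+\eps$, which is not below $1/6$, so you must also shrink $\delta$ (or, as the paper does, invoke only the existence of one subsample achieving error at most $1/3$ under each $\mu$); this changes nothing asymptotically.
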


Our construction (see~Section~\ref{sec:const}) of sample compression schemes 
is overall quite short and simple.
It is inspired by Freund's work~\cite{DBLP:journals/iandc/Freund95}
where majority is used to boost the accuracy of learning procedures.
It also uses several known properties of PAC learnability and VC dimension,
together with von Neumann's minimax theorem,
and it reveals approximate but efficient
equilibrium strategies for zero-sum games of low VC dimension
(see Section~\ref{sec:press} below).

The construction is even more efficient when the dual
class is also under control.
The dual concept class $C^*\subseteq \{0,1\}^C$ of $C$ is defined as
the set of all functions $f_x:C\rightarrow \{0,1\}$ 
defined by $f_x(c) = c(x)$.
If we think of $C$ as a binary matrix whose rows
are concepts in $C$ and columns are elements of $X$, then
$C^*$ corresponds to the distinct rows of the transposed matrix.

\begin{theorem}[Compression using dual VC dimension]
\label{thm:mainVC*}
If $C\subseteq \{0,1\}^X$ has VC dimension $d >0$
and $C^*$ has VC dimension $d^*>0$, 
then $C$ has a sample compression scheme of size $k \log k$
with $k = O(d^* \cdot d)$.
\end{theorem}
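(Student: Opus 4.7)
The plan is a boosting-by-majorities argument in the spirit of Freund~\cite{DBLP:journals/iandc/Freund95}, combined with von Neumann's minimax theorem, where the number of boosting rounds is bounded by the dual VC dimension via a uniform convergence estimate. Fix a labeled sample $(Y,y)$ with $y = c|_Y$.

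First extract a weak learner from Theorem~\ref{thm:BlumerPAC} (taking $\eps$ and $\delta$ both constant and small): there exist $k_0 = O(d)$ and a proper reconstruction $\rho_0 : L_C(k_0) \to C$ such that for every distribution $\mu$ on $Y$, some size-$k_0$ sub-multiset $Z$ of $Y$ satisfies $\mu(\{x : h_Z(x) = c(x)\}) \geq 2/3$, where $h_Z := \rho_0(Z, y|_Z)$. Apply von Neumann's minimax theorem to the finite zero-sum game whose rows are $x \in Y$, columns are size-$k_0$ sub-multisets $Z$, and payoff is $\mathds{1}[h_Z(x) = c(x)]$; this yields a distribution $\nu$ over such sub-multisets with
\[
\Pr_{Z \sim \nu}[h_Z(x) = c(x)] \geq 2/3 \quad \text{for every } x \in Y.
\]

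Next draw $Z_1,\ldots,Z_T$ i.i.d.\ from $\nu$, set $h_i := h_{Z_i}$, and consider $h := \maj(h_1,\ldots,h_T)$. We seek $T$ such that for some realization, more than half of the $h_i$'s agree with $c$ at every $x \in Y$ simultaneously. Here $d^*$ enters crucially: the $h_i$'s are i.i.d.\ from the distribution $\tilde\nu$ on $C$ induced by $\nu$, and the events $B_x := \{h \in C : h(x) = c(x)\}$ coincide, up to complementation, with concepts in $C^*$, so the family $\{B_x : x \in X\}$ has VC dimension $O(d^*)$. The standard VC uniform convergence bound at constant additive accuracy $1/10$ then gives $T = O(d^*)$ for which, with positive probability, every $B_x$ has empirical $\tilde\nu$-mass within $1/10$ of its true mass (which is at least $2/3$). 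Hence $h(x) = c(x)$ for all $x \in Y$.

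To encode the scheme, retain $Z := Z_1 \cup \cdots \cup Z_T$ as a sub-multiset of $Y$, together with side information specifying how $Z$ decomposes into the ordered tuple $(Z_1,\ldots,Z_T)$. The kernel size is $k := k_0 T = O(d \cdot d^*)$, the side information requires $\log(T^k) \leq k \log k$ bits, and reconstruction recomputes the $h_i$'s and outputs their majority. The main obstacle I expect is the dual-VC step: one must confirm that $T = O(d^*)$ already suffices for simultaneous constant-accuracy convergence across all $x \in Y$, thereby replacing the naive union bound's $\log|Y|$-dependence with a bound depending only on the dual dimension of the class and the constant error tolerance.
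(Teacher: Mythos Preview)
Your proposal is correct and follows essentially the same route as the paper: a proper weak learner of sample size $O(d)$ from Theorem~\ref{thm:BlumerPAC}, von Neumann's minimax theorem to obtain a mixed strategy over hypotheses that is $2/3$-correct at every point of $Y$, and then the dual VC bound (your uniform-convergence step is exactly the paper's Lemma~\ref{lem:VCsample}) to replace that mixed strategy by a multiset of $T=O(d^*)$ hypotheses whose majority vote is correct on all of $Y$. The encoding and size accounting also match the paper's.
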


Theorem~\ref{thm:compressionVC} follows from Theorem~\ref{thm:mainVC*}
via the following bound, which was observed by Assouad~\cite{Assouad}.

\begin{claim}[Dual VC dimension \cite{Assouad}]
\label{clm:assou}
If $\VC(C) \leq d$, then $\VC(C^*) < 2^{d+1}$.
\end{claim}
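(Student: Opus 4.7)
The plan is to prove the contrapositive: if $C^*$ shatters a set of $n \ge 2^{d+1}$ concepts, then $C$ shatters a set of $d+1$ points in $X$. So suppose for contradiction that $c_1,\ldots,c_n \in C$ with $n = 2^{d+1}$ are $C^*$-shattered. Unpacking the definition of $C^*$, this means that for every subset $T \subseteq [n]$ there is a point $x_T \in X$ with the property that $c_i(x_T) = 1$ if and only if $i \in T$.

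The key idea is a clever choice of $d+1$ such points, using the fact that $n$ is exactly $2^{d+1}$. Fix an arbitrary bijection $\phi : [n] \to \{0,1\}^{d+1}$, and for each coordinate $j \in [d+1]$ define
\[
T_j \;=\; \{ i \in [n] : \phi(i)_j = 1 \} \;\subseteq\; [n],
\]
together with the witness point $x_j := x_{T_j} \in X$ guaranteed by $C^*$-shattering. Because $\phi$ is a bijection, the sets $T_1,\ldots,T_{d+1}$ are pairwise distinct, which forces the points $x_1,\ldots,x_{d+1}$ to be pairwise distinct as well (distinct $T_j$'s induce distinct patterns on $c_1,\ldots,c_n$).

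Now I claim that $\{x_1,\ldots,x_{d+1}\}$ is shattered by $C$. Given any target pattern $S \subseteq [d+1]$, let $i_S \in [n]$ be the unique index with $\phi(i_S) = \mathbf{1}_S$ (the indicator vector of $S$). Then for every $j \in [d+1]$,
\[
c_{i_S}(x_j) \;=\; \mathbf{1}[i_S \in T_j] \;=\; \phi(i_S)_j \;=\; \mathbf{1}[j \in S],
\]
so $c_{i_S} \in C$ realizes precisely the pattern $S$ on $\{x_1,\ldots,x_{d+1}\}$. Hence $C$ shatters a set of size $d+1$, contradicting $\VC(C) \le d$.

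There is no real obstacle here; the only thing to set up carefully is the bookkeeping that matches the $2^{d+1}$ shattering witnesses in $X$ to the $2^{d+1}$ possible labellings of $d+1$ points, which is exactly what the bijection $\phi$ encodes. The bound is tight up to the inequality being strict because it is genuinely impossible to shatter $2^{d+1}$ concepts once $\VC(C) \le d$, as the argument above shows.
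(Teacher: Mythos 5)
Your proof is correct, and it is exactly the classical argument behind Assouad's bound, which the paper only cites without reproducing: a $C^*$-shattered family of $2^{d+1}$ concepts, indexed by a bijection with $\{0,1\}^{d+1}$, yields $d+1$ witness points in $X$ that $C$ shatters. The one delicate point --- that the witness points $x_1,\dots,x_{d+1}$ are genuinely distinct --- you handle correctly by noting that distinct $T_j$'s force distinct behaviour of some $c_i$ on the corresponding points.
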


A natural example for which the dual class is well behaved
is geometrically defined classes.
Assume, for example, that $C$ 
represents the incidence relation among halfspaces 
and points in $r$-dimensional real space
(a.k.a.\ sign rank or Dudely dimension $r$).
That is, for every $c \in C$ there is a vector $a_c \in \R^r$
and for every $x \in X$ there is a vector $b_x \in \R^r$ so that
$c(x) = 1$ if and only if the inner product 
$\langle a_c, b_x \rangle = \sum_{j=1}^r a_c(j) b_x(j)$
is positive.
It follows that $\VC(C) \leq r$,
but the symmetric structure also implies that $\VC(C^*) \leq r$.
So, the compression scheme constructed here for this $C$ actually 
has size $O(r^2 \log r)$ and not $2^{O(r)}$.

\paragraph{Proof background and overview.}
Freund~\cite{DBLP:journals/iandc/Freund95} and later on Freund and 
Schapire~\cite{DBLP:journals/jcss/FreundS97} showed that
for every class $C$ that is PAC learnable with $d$ samples, 
there exists a compression scheme that compresses
a $C$-labeled sample $(Y,y)$ of size $m$ to a sub-sample of size $k = O(d\log m)$
with additional information of $k \log k$ bits
(for a more detailed discussion, see 
Sections 1.2 and 13.1.5 in~\cite{schapire2012boosting}). 
Their constructive proof is iterative:
In each iteration $t$, a distribution $\mu_t$ on $Y$
is carefully and adaptively chosen. Then, $d$ independent points from $Y$ are drawn
according to $\mu_t$, and fed into the learning map to produce an hypothesis $h_t$.
They showed that after $T=O(\log(1/\eps))$ iterations, the majority vote $h$ 
over $h_1,\ldots,h_T$ is an $\eps$-approximation of $y$
with respect to the uniform measure on $Y$.
In particular, if we choose $\eps < 1/m$, then $h$ completely agrees with $y$ on $Y$.
This makes $T = O(\log m)$ and gives a sample compression scheme from 
a sample of size $m$ to a sub-sample of size $d \cdot T = O(d\log m)$.

The size of Freund and Schapire's compression scheme is not uniformly 
bounded, it depends on $|Y|$.
A first step towards removing this dependence
is observing that their proof can be replaced by a combination of
von Neumann's minimax theorem and a Chernoff bound.
In this argument, the $\log m$ factor eventually comes from a union bound
over the $m$ samples.
The compression scheme presented in this text replaces
the union bound with a more accurate analysis 
that utilizes the VC dimension of the dual class.
This analysis ultimately replaces the $\log m$ factor by a $d^*$ factor.

\section{Preliminaries}
\label{sec:press}

\paragraph{Approximations.}
The following theorem shows that every distribution can be approximated
by a distribution of small support, when
the statistical tests belong to a class of small VC dimension.
This phenomenon was first proved by Vapnik and Chervonenkis~\cite{zbMATH03391742}, 
and was later quantitively improved in~\cite{Li:2000:IBS:338219.338267,zbMATH00567173}.

\begin{theorem}[Approximations for bounded VC dimension \cite{zbMATH03391742,Li:2000:IBS:338219.338267,zbMATH00567173}]
\label{thm:VC}
Let $C\subseteq\{0,1\}^X$ of VC dimension $d$. Let $\mu$
be a distribution on $X$.
For all $\eps >0$, there exists a multiset $Y\subseteq X$ of size $|Y|\leq O(d/ \eps^2)$
such that for all $c\in C$,
$$\left| \mu(\{x \in X : c(x) =1 \}) - \frac{|\{x \in Y : c(x) =1 \}|}{|Y|} \right| \leq\eps. $$
\end{theorem}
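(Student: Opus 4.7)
The plan is to prove the bound by the probabilistic method, via uniform convergence of empirical measures. I would fix $n = O(d/\eps^2)$, draw $Y = (x_1,\ldots,x_n)$ i.i.d.\ from $\mu$, and argue that with strictly positive probability,
$$\sup_{c \in C}\left|\mu(\{x:c(x)=1\}) - \tfrac{1}{n}\bigl|\{i: c(x_i)=1\}\bigr|\right| \leq \eps,$$
so such a multiset $Y$ exists. The rest is a uniform deviation bound.

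First I would apply the symmetrization (double sampling) trick of Vapnik and Chervonenkis: if $Y'$ is an independent ghost sample of the same size, then the probability that some $c \in C$ has $|\mu(c) - \hat\mu_Y(c)| > \eps$ is bounded by twice the probability that some $c \in C$ has $|\hat\mu_{Y'}(c) - \hat\mu_Y(c)| > \eps/2$. This reduces the problem to comparing two empirical distributions, which no longer involves $\mu$ directly. Next, conditional on the combined multiset $Y \cup Y'$, the joint distribution is invariant under swapping the $i$-th coordinates of $Y$ and $Y'$ independently; so for each fixed $c$, Hoeffding's inequality applied to $n$ independent $\pm 1$-weighted terms gives $\Pr[|\hat\mu_{Y'}(c) - \hat\mu_Y(c)| > \eps/2] \leq 2e^{-\Omega(\eps^2 n)}$. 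Since only the restriction of $c$ to $Y \cup Y'$ matters, I can union-bound over the number of distinct dichotomies that $C$ induces on $Y \cup Y'$, which by the Sauer--Shelah lemma is at most $O((n/d)^d)$.

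The main obstacle is the precise rate. The naive union bound above forces $\eps^2 n \gtrsim d\log(n/d)$, yielding only $n = O\bigl((d/\eps^2)\log(1/\eps)\bigr)$, which is off by a logarithmic factor from the claimed $O(d/\eps^2)$. Removing this logarithm is the substantive content of the improvements of Talagrand and the works cited as \cite{Li:2000:IBS:338219.338267,zbMATH00567173}, and it requires a multi-scale argument rather than a single union bound. Concretely, I would build a chain of $L^1(\hat\mu_{Y\cup Y'})$-nets for $C|_{Y \cup Y'}$ at geometrically decreasing scales $\eps, \eps/2, \eps/4, \ldots$; the size of each net is controlled by a packing number whose logarithm is $O(d)$ at scale $\eps$ (and grows only logarithmically at finer scales), so that Dudley-type chaining balances the Hoeffding tail against the entropy at every scale without paying an extra $\log(1/\eps)$.

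Because the statement as quoted is used only as a black-box input for the main theorem, the cleanest exposition is to perform the symmetrization--Sauer--Shelah--Hoeffding argument in full (which suffices to give the weaker $(d/\eps^2)\log(1/\eps)$ bound, and already makes the present paper self-contained up to a harmless log factor) and then cite \cite{Li:2000:IBS:338219.338267,zbMATH00567173} for the sharp $O(d/\eps^2)$ rate.
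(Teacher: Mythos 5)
The paper does not prove this theorem at all: it is imported as a black box from \cite{zbMATH03391742,Li:2000:IBS:338219.338267,zbMATH00567173}, so there is no internal proof to compare yours against. Judged on its own terms, your sketch is the standard and correct route. The symmetrization step, the coordinate-swapping (Rademacher/permutation) argument, Hoeffding's bound of $2e^{-\Omega(\eps^2 n)}$ per dichotomy, and the Sauer--Shelah union bound over at most $O((n/d)^d)$ dichotomies are all sound, and you are right that this yields only $n = O\bigl((d/\eps^2)\log(1/\eps)\bigr)$; you correctly locate the extra logarithm in the single-scale union bound and correctly attribute its removal to chaining (Talagrand) or to the combinatorial argument of Li, Long and Srinivasan. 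One point to tighten if you write the chaining step out: the input that makes the Dudley entropy integral converge without a log loss is Haussler's packing bound, which says the $\delta$-packing number of a VC class of dimension $d$ in $L^1$ is at most $(c/\delta)^{d}$ — so the log-packing number at scale $\delta$ is $d\log(c/\delta)$, not $O(d)$ as you state for scale $\eps$; what saves you is that $\int_0^1 \sqrt{d\log(c/\delta)}\,d\delta = O(\sqrt{d})$ converges. Your proposed exposition — prove the log-weaker bound in full and cite the sharp rate — is entirely consistent with how the paper uses the theorem, since the extra $\log(1/\eps)$ factor would only change the constants hidden in the $O(d^*\cdot d)$ of Theorem~\ref{thm:mainVC*} (where $\eps$ is the fixed constant $1/8$), not the qualitative conclusion.
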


\paragraph{Carath\'{e}odory's theorem.} 
The following simple lemma
can be thought of as an approximate and combinatorial
version of Carath\'{e}odory's theorem from convex geometry.
Let $C\subseteq\{0,1\}^n \subset \R^n$ and denote by $K$ the convex hull
of $C$ in $\R^n$. 
Carath\'{e}odory's theorem says that 
every point $p\in K$ is a convex combination of at most $n+1$
points from $C$. 
The lemma says that if $\VC(C^*)$ is small
then every $p \in K$ can be approximated by a convex combination with a small support.

\begin{lemma}[Sampling for dual VC dimension]
\label{lem:VCsample}
Let $C \subseteq \{0,1\}^X$
and let $d^* = \VC(C^*)$.
Let $p$ be a distribution on $C$ and let $\eps > 0$.
Then, 
$p$ can be $\eps$-approximated in $L^\infty$ by an average 
of at most $O(d^*/\eps^2)$ points from $C$.
That is, there is a multiset $F \subseteq C$ of size $|F| \leq O(d^*/\eps^2)$
so that for every $x \in X$,
$$\left| p ( \{ c \in C : c(x) = 1\}) - \frac{|\{ f \in F : f(x) = 1 \}|}{|F|} \right| 
\leq \eps .$$
\end{lemma}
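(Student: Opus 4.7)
The plan is to recognize this lemma as a direct dualization of Theorem~\ref{thm:VC}. The quantity $p(\{c \in C : c(x)=1\})$ and its empirical counterpart on $F$ are measuring the same object, namely the $p$-mass of a particular set of concepts, but viewed from the dual perspective. Concretely, for each $x \in X$ consider the dual function $f_x : C \to \{0,1\}$ defined by $f_x(c) = c(x)$. Then the set $C^* = \{f_x : x \in X\}$ lives in $\{0,1\}^C$ and, by hypothesis, has VC dimension $d^*$.

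With this reformulation, $p(\{c \in C : c(x)=1\}) = p(\{c \in C : f_x(c) = 1\})$, and for any multiset $F \subseteq C$,
\[
\frac{|\{f \in F : f(x)=1\}|}{|F|} \;=\; \frac{|\{c \in F : f_x(c)=1\}|}{|F|}.
\]
So the desired $L^\infty$ approximation over $x \in X$ is precisely the statement that the empirical measure on $F$ approximates $p$ uniformly over all tests in $C^*$.

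I would then simply invoke Theorem~\ref{thm:VC} with the concept class $C^*\subseteq\{0,1\}^C$, the ground set $C$, the distribution $p$ on $C$, and error parameter $\eps$. Since $\VC(C^*) = d^*$, the theorem produces a multiset $F \subseteq C$ of size $|F| \leq O(d^*/\eps^2)$ such that for every $f_x \in C^*$,
\[
\left| p(\{c \in C : f_x(c)=1\}) - \frac{|\{c \in F : f_x(c)=1\}|}{|F|} \right| \leq \eps,
\]
which, after translating back, is exactly the conclusion of the lemma.

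There is essentially no obstacle here beyond spotting the correct dualization; the real content sits in Theorem~\ref{thm:VC}, which bounds the sample size needed for uniform approximation in terms of VC dimension. The only thing worth double-checking is that $C^*$ is indeed a well-defined subset of $\{0,1\}^C$ (it is, by construction, and identical elements $f_x = f_{x'}$ just get identified, which does not affect the bound), and that Theorem~\ref{thm:VC} is stated for arbitrary finite or countable ground sets so that applying it with $C$ as the ground set is legitimate.
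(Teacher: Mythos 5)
Your proposal is correct and is exactly the paper's argument: view each $x \in X$ as the dual concept $f_x \in C^* \subseteq \{0,1\}^C$, note that $p$ is a distribution on the domain of these functions, and apply Theorem~\ref{thm:VC} to $C^*$. You have merely spelled out the translation in more detail than the paper does.
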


\begin{proof}
Every $x \in X$ corresponds to a concept in $C^*$.
The distribution $p$ is a distribution on the domain of the
functions in $C^*$. 
The lemma follows by Theorem~\ref{thm:VC} applied to $C^*$.
\end{proof}

\paragraph{Minimax.}
Von Neumann's minimax theorem~\cite{Neumann1928}
is a seminal result in game theory (see e.g.\ the textbook \cite{owen1995game}).
Assume that there are 2 players\footnote{We focus on the case
of zero-sum games.}, a row player and a column player.
A pure strategy of the row player is $r \in [m]$
and a pure strategy of the column player is $j \in [n]$.
A mixed strategy is a distribution on pure strategies.
Let $M$ be a binary matrix so that
$M(r,j) = 1$ if and only if the row player wins the game
when the pure strategies $r,j$ are played.

The minimax theorem says that if for every mixed strategy 
$q$ of the column player,
there is a mixed strategy $p$ of the row player
that guarantees that the row player wins with probability at least $V$,
then there is a mixed strategy $p^*$ of the row player
so that for all mixed strategies $q$ of the column player, 
the row player wins with probability at least $V$.
A similar statement holds for the column player.
This implies that there is
a pair of mixed strategies $p^*,q^*$ that form a Nash equilibrium
for the zero-sum game $M$ defines
(see \cite{owen1995game}).

\begin{theorem}[Minimax \cite{Neumann1928}]
\label{thm:minmax}
Let $M\in\mathbb{R}^{m\times n}$ be a real matrix.
Then,
$$\min_{p\in \Delta^m}\max_{q\in \Delta^n} \ p^tMq =
\max_{q\in \Delta^n}\min_{p\in\Delta^m} \ p^tMq,$$
where $\Delta^\ell$ is the set of distributions on $[\ell]$.
\end{theorem}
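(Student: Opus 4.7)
The plan is to prove the two inequalities separately. The direction
\[
\min_{p \in \Delta^m} \max_{q \in \Delta^n} p^t M q \;\geq\; \max_{q \in \Delta^n} \min_{p \in \Delta^m} p^t M q
\]
is ``weak duality'' and is immediate: for any fixed $p_0 \in \Delta^m$ and $q_0 \in \Delta^n$ one has $p_0^t M q_0 \geq \min_p p^t M q_0$, so taking $\max$ over $q_0$ and then $\min$ over $p_0$ on the left gives the bound. The content is in the reverse inequality.

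For the hard direction, the two natural routes I would consider are (i) invoking strong LP duality applied to the row-player's program $\min\{\,v : p^t M \leq v\,\mathbf{1}_n^t,\ p \in \Delta^m\,\}$, whose dual is the column-player's program $\max\{\,u : Mq \geq u\,\mathbf{1}_m,\ q \in \Delta^n\,\}$; and (ii) a direct separating-hyperplane argument. I will outline (ii) since it uses only compactness and convexity. Set $V = \max_q \min_p p^t M q$. It suffices to produce a $p^\star \in \Delta^m$ with $(M^t p^\star)_j \leq V$ for every $j \in [n]$, since then $\max_q (p^\star)^t M q = \max_j (M^t p^\star)_j \leq V$. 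Consider the set
\[
K \;=\; \{M^t p : p \in \Delta^m\} + \R^n_{\geq 0} \;\subseteq\; \R^n,
\]
which is closed and convex (closed because $\Delta^m$ is compact and $\R^n_{\geq 0}$ is a closed cone). If $V\,\mathbf{1}_n \in K$ we are done. Otherwise the separating hyperplane theorem yields a nonzero $q \in \R^n$ with $q^t k \geq q^t(V\,\mathbf{1}_n)$ for all $k \in K$; the presence of the $+\R^n_{\geq 0}$ summand forces $q$ to have nonnegative coordinates (else $q^t k$ could be driven to $-\infty$ over $K$), and after rescaling we may take $q \in \Delta^n$. Specializing to $k = M^t p$ yields $p^t M q \geq V$ for every $p \in \Delta^m$, and sharpening the separation to be strict (using closedness of $K$ and $V\,\mathbf{1}_n \notin K$) gives $\min_p p^t M q > V$, contradicting the definition of $V$.

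The main obstacle in this plan is the careful bookkeeping in the separating-hyperplane step: verifying that the normal vector $q$ can be chosen with nonnegative entries, and that the separation can be made strict rather than weak. Both points are standard but easy to mishandle. Route (i) via LP duality sidesteps these technicalities entirely, at the cost of importing a stronger black box; in an expository paper like this one, citing von Neumann's theorem (or Sion's generalization) directly is also a perfectly acceptable option since the result is classical.
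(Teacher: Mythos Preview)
Your proposal is a correct outline of a standard proof of von Neumann's minimax theorem, and you yourself flag the only delicate points (nonnegativity of the separating normal and strictness of the separation). However, there is nothing to compare it against: the paper does not prove Theorem~\ref{thm:minmax}. It is stated as a classical result with a citation to von Neumann~(1928) and then used as a black box inside the proof of Claim~\ref{clm:ThereIsF}. In other words, the paper takes exactly the option you mention in your last sentence --- cite the result and move on --- since the minimax theorem is not the object of study here but only a tool feeding into the compression-scheme construction.

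So your write-up is not wrong, just superfluous relative to the paper's purposes. If you want to match the paper, replace the proof by a citation; if you want to include a proof for self-containedness, either your separating-hyperplane sketch or the LP-duality route is fine.
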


The arguments in the proof of Theorem~\ref{thm:mainVC*} below imply the following
variant of the minimax theorem, which may be of interest
in the context of game theory.
The minimax theorem holds for a general matrix $M$.
In other words, there is no assumption on the set of winning/losing states
in the game.

We observe that a combinatorial restriction on the winning/losing states
in the game implies that there is an approximate efficient equilibrium state.
Namely, if the rows of $M$ have VC dimension $d$ and the columns of $M$ have
VC dimension $d^*$, then for every $\eps>0$,
there is a multiset of $O(d^*/\eps^2)$ pure strategies $R \subseteq [m]$ for the row player,
and a multiset of $O(d/\eps^2)$ pure strategies $J \subseteq [n]$ for the column player,
so that a uniformly random choice from $R,J$ guarantees the players
a gain that is $\eps$-close to the gain in the equilibrium strategy.
Such a pair of mixed strategies is called an $\eps$-Nash equilibrium.
Lipton and Young~\cite{DBLP:journals/corr/cs-CC-0205035} 
showed that in every zero-sum game 
there are $\eps$-Nash equilibriums with logarithmic support\footnote{Lipton, Markakis and Mehta~\cite{Lipton:2003:PLG:779928.779933} proved a similar statement for general games.}. 
The ideas presented here show that if, say, the rows of the matrix of the game
have constant VC dimension, then there are $\eps$-Nash equilibriums with constant support.

\section{A sample compression scheme}
\label{sec:const}

We start with a high level description of the compression process
(Theorem~\ref{thm:mainVC*}).
Given a sample of the form $(Y,y)$,
the compression identifies $T \leq O(d^*)$
subsets $Z_1,\ldots,Z_T$ of $Y$, each of size at most $d$.
It then compresses $(Y,y)$
to $(Z,z)$ with $Z = \bigcup_{t \in [T]} Z_t$ and $z = y|_Z$.
The additional information $i \in I$ allows to recover $Z_1,\ldots,Z_T$ from $Z$.
The reconstruction process uses the information $i \in I$
to recover $Z_1,\ldots,Z_T$ from $Z$,
and then uses the PAC learning map $H$
to generate $T$ hypotheses $h_1,\ldots,h_T$ defined as $h_t = H(Z_t,z|_{Z_t})$.
The final reconstruction hypothesis $h = \rho((Z,z),i)$ is the majority vote
over $h_1,\ldots,h_T$.

\begin{proof}[Proof of Theorem~\ref{thm:mainVC*}]
Since the VC dimension of $C$ is $d$,
by Theorem~\ref{thm:BlumerPAC},
there is $s=O(d)$ and a proper learning map $H:L_C(s) \to C$ so that
for every $c \in C$ and for every probability distribution $q$ on $X$,
there is $Z \subseteq \text{supp}(q)$ of size $|Z| \leq s$
so that $q(\{x \in X : h_Z(x) \neq c(x)\}) \leq 1/3$
where $h_Z = H(Z,c|_Z)$.

\paragraph{Compression.} 
Let $(Y,y)\in L_C(\infty)$.
Let 
$${\cal H} = {\cal H}_{Y,y} = \{H(Z,z) : Z\subseteq Y, |Z|\leq s, z=y|_Z\}
\subseteq C.$$
The compression is based on the following claim.

\begin{claim}
\label{clm:ThereIsF}
There are $T \leq O(d^*)$ sets $Z_1,Z_2,\ldots,Z_T \subseteq Y$, 
each of size at most $s$, so that the following holds.
For $t \in [T]$, let
\begin{align}
\label{eqn:Ft}
f_t = H(Z_t,y|_{Z_t}).
\end{align}
Then, for every $x \in Y$,
\begin{align} 
\label{eqn:MajOfF}
|\{ t \in [T] : f_t(x) = y(x)\}| > T/2 .
\end{align}
\end{claim}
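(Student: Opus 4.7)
The plan is to rephrase the claim as a two-player zero-sum game, apply von Neumann's minimax theorem to obtain a mixed row strategy that beats every column with probability at least $2/3$, and then sparsify that mixed strategy via Lemma~\ref{lem:VCsample}; the supports of the resulting pure strategies will be the sets $Z_1,\ldots,Z_T$.

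Concretely, I would set up the game whose row strategies are the hypotheses in $\mathcal{H}$ and whose column strategies are the elements of $Y$, with payoff $M(h,x)=1$ when $h(x)=y(x)$ and $0$ otherwise. Both strategy sets are finite (there are only finitely many pairs $(Z,y|_Z)$ with $Z\subseteq Y$ and $|Z|\leq s$), so Theorem~\ref{thm:minmax} applies. The PAC guarantee stated at the top of the proof says exactly that for every distribution $q$ on $Y$ some $Z\subseteq Y$ of size at most $s$ achieves $q(\{x:h_Z(x)=y(x)\})\geq 2/3$; equivalently, $\min_{q}\max_{h\in\mathcal{H}}\, \sum_x q(x)\,M(h,x) \geq 2/3$. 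Theorem~\ref{thm:minmax} then yields a distribution $p^*$ on $\mathcal{H}$ with
\[
\Pr_{h\sim p^*}[h(x)=y(x)] \geq 2/3 \qquad \text{for every } x\in Y.
\]

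Next I would sparsify $p^*$. Since $\mathcal{H}\subseteq C$ and $\VC(C^*)=d^*$, Lemma~\ref{lem:VCsample} applied with $\eps=1/7$ produces a multiset $F\subseteq \mathcal{H}$ of size $T=O(d^*/\eps^2)=O(d^*)$ such that
\[
\left|\Pr_{h\sim p^*}[h(x)=1] \;-\; \frac{|\{f\in F: f(x)=1\}|}{|F|}\right| \leq \eps \qquad \text{for every } x\in X.
\]
Complementing the event when $y(x)=0$ gives the same $\eps$-approximation for $\{h:h(x)=y(x)\}$, so for every $x\in Y$ at least a $2/3-1/7>1/2$ fraction of $F$ correctly labels $x$. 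Writing each $f\in F$ as $H(Z_t,y|_{Z_t})$ with $Z_t\subseteq Y$ of size at most $s$ produces the required sets.

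The main subtlety I expect is aligning the conclusion of Lemma~\ref{lem:VCsample}, which approximates events of the form $\{c:c(x)=1\}$, with the ``correctness'' events $\{h:h(x)=y(x)\}$ actually required by the claim; this is handled by the complementation observation above, but it is the one step where the specific labeling $y$ enters the sparsification argument. A secondary bookkeeping point is to keep the direction of minimax straight: PAC gives $\min_q\max_h \geq 2/3$, and Theorem~\ref{thm:minmax} lets us swap the order of the min and max to extract the single mixed row strategy $p^*$ on which the sparsification is then performed.
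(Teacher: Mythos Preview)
Your proposal is correct and is essentially identical to the paper's own proof: set up the zero-sum game on $\mathcal{H}\times Y$, use the PAC guarantee plus Theorem~\ref{thm:minmax} to get a mixed strategy $p^*$ with pointwise correctness probability $\geq 2/3$, then sparsify via Lemma~\ref{lem:VCsample} (the paper uses $\eps=1/8$ rather than $1/7$). Your explicit remark about complementing when $y(x)=0$ to align Lemma~\ref{lem:VCsample} with the events $\{h:h(x)=y(x)\}$ is a point the paper leaves implicit, so you are if anything slightly more careful there.
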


Given the claim, the compression $\kappa(Y,y)$ is defined as
$$Z = \bigcup_{t \in [T]} Z_t \ \ \text{and} \ \ z = y|_Z.$$
The additional information $i \in I$ allows to recover
the sets $Z_1,\ldots,Z_T$ from the set $Z$.
There are many possible ways to encode this information,
but the size of $I$ can be chosen to be at most $k^{k}$
with $k = 1+ O(d^*) \cdot s \leq O(d^*\cdot d)$.

\begin{proof}[Proof of Claim~\ref{clm:ThereIsF}]
By choice of $H$,
for every distribution $q$ on $Y$, there is  $h \in {\cal H}$ so that
$$q\left(\{x \in Y : h(x) = y(x) \} \right)\geq 2/3.$$
By Theorem~\ref{thm:minmax}, there is a distribution $p$
on ${\cal H}$ such that for every $x\in Y$,
\begin{align*}
p(\{h \in {\cal H} : h(x) = y(x)\}) \geq 2/3.
\end{align*}
By Lemma~\ref{lem:VCsample} applied to ${\cal H}$ and $p$
with $\eps=1/8$,
there is a multiset $F  = \{f_1,f_2,\ldots,f_T\} \subseteq {\cal H}$ of size $T \leq O(d^*)$ so that for every $x \in Y$,
\begin{align*} 
\frac{|\{ t \in [T] : f_t(x) = y(x) \}|}{T}
\geq p(\{h \in {\cal H} : h(x) = y(x) \}) - 1/8 > 1/2.
\end{align*}
For every $t \in [T]$, let
$Z_t$ be a subset of $Y$ of size $|Z_t| \leq d$ so that 
\begin{align*}
H(Z_t,y|_{Z_t}) = f_t.
\end{align*}
\end{proof}

\paragraph{Reconstruction.}
Given $((Z,z),i)$,
the information $i$ is interpreted as a list of $T$ subsets 
$Z_1,\ldots,Z_T$ of $Z$, each of size at most $d$.
For $t \in [T]$, let
$$h_t = H(Z_t,z|_{Z_t}).$$
Define $h = \rho((Z,z),i)$ as follows:
For every $x \in X$, let
$h(x)$ be a symbol in $\{0,1\}$ that appears most in the list 
$$\lambda_x((Z,z),i) = (h_1(x),h_2(x),\ldots,h_T(x)),$$
where ties are arbitrarily broken.

\paragraph{Correctness.} 
Fix $(Y,y) \in L_C(\infty)$.
Let $((Z,z),i) = \kappa(Y,y)$ and $h = \rho((Z,z),i)$.
For $x \in Y$, consider the list
$$\phi_x(Y,y) = (f_1(x),f_2(x),\ldots,f_T(x))$$
defined in the compression process of $(Y,y)$.
The list $\phi_x(Y,y)$ is identical to the list $\lambda_x((Z,z),i)$
due to the following three reasons:
Equation~\eqref{eqn:Ft}; 
the information $i$ allows to correctly recover
$Z_1,\ldots,Z_T$; 
and $y|_{Z_t} = z|_{Z_t}$ for all $t \in [T]$.
Finally, by~\eqref{eqn:MajOfF}, for every $x \in Y$, the symbol
$y(x)$ appears in more than half of the list $\lambda_x((Z,z),i)$ so indeed $h(x) = y(x)$.
\end{proof}

\section{Concluding remarks and questions}

We have shown that every VC class admits
a sample compression scheme with size 
exponential in its VC dimension.
This is the first bound that depends only on the VC dimension,
and holds for all binary-labeled classes.
It is worth noting that
many of the known compression schemes 
for special cases, like~\cite{DBLP:journals/ml/FloydW95,
DBLP:journals/dam/Ben-DavidL98,
DBLP:journals/jmlr/KuzminW07,
DBLP:journals/jmlr/RubinsteinR12,
DBLP:conf/colt/LivniS13},
have size $d$ or $O(d)$ which is essentially optimal.
In many of these cases, 
our construction is in fact of size polynomial in $d$, since the VC dimension of the dual class is small as well. 
Nevertheless, 
Floyd and Warmuth's question~\cite{DBLP:journals/ml/FloydW95,DBLP:conf/colt/Warmuth03} whether 
sample compression schemes of size $O(d)$ always exist remains open.

\paragraph{Multi-labeled classes.}
Unlike VC dimension, sample compression schemes
as well as the fact that they imply PAC learnability naturally
generalizes to multi-labeled concept classes
(see e.g.~\cite{DBLP:conf/alt/SameiYZ14}.) 
Littlestone and Warmuth's question
is therefore an instance of a more general question:
Does the size of an optimal sample compression scheme
for a given class capture the sample complexity of PAC learning of this class?
A positive answer to this question
will yield a universal and natural parameter that captures 
the sample complexity of PAC learning.

There are many generalization of  VC dimension to multi-labeled 
concept classes $C\subseteq \Sigma^X$,
see \cite{BenDavid95} and references within.
An example that naturally comes up in our analysis is the distinguishing dimension $\DD(C)$: For every $c \in C$, define a binary concept class
$B_c \subseteq \{0,1\}^X$ as the set of all $b_h$, for $h \in C$,
defined by $b_h(x) = 1$ if and only if $h(x) = c(x)$.
Define
$$\DD(C) = \sup \{ \VC(B_{c}) : c \in C \}.$$
If $C$ is binary then $\VC(C) = \DD(C)$.
This definition of dimension is similar to notions used
in \cite{Natarajan89,Dudley87,BenDavid95}.
It can be verifies that if $C$ is multi-labeled then
our compression scheme for $C$ has size exponential in $\DD(C)$.
However, although $\Omega(\VC(C))$ is a lower bound on the sample
complexity of PAC learning for a binary-labeled $C$,
the distinguishing dimension $\DD(C)$
is not a lower bound on the sample complexity
of PAC learning for a multi-labeled $C$.
Indeed, an example constructed by
Danieli and Shalev-Schwartz~\cite{DBLP:conf/colt/DanielyS14}
implies that there is a concept class $C \subseteq \Sigma^X$
that is properly PAC learnable with $O(1)$ samples
but $\DD(C) \geq \Omega(\log|\Sigma|)$.

\paragraph{Learners' complexity.}
The efficiency of our construction relies on 
the fact that every binary-labeled concept class $C$ has a proper learner
with optimal sample complexity. A closer look at the proof 
reveals that it is valid
even if the learner is not proper; it suffices that the set of hypotheses produced
by the learner have low VC dimension.

This motivates the following natural question:
Is it true that for every learning map $H$ for $C \subseteq \{0,1\}^X$ with $\VC(C)=d$
and for every $c \in C$,
the set of hypotheses that $H$ outputs 
when learning $c$ has VC dimension $O(d)$ as well?

The answer is negative;
some students learn 
although they make things more complicated than necessary.
Here is an example.
Let $n$ be a power of $2$, and
consider the concept class $C  = \{(00\ldots 0)\} \subset \{0,1\}^{X}$ with
$X=[n+3 \log n]$
consisting only of the all zero concept.
The learning map $H$ gets as input a labeled sample $(Y,y) \in L_C(3)$ of size $3$,
and outputs the following hypothesis $h$.
If $Y \not \subseteq [n]$ then $h$ is defined to be $0$ everywhere.
Otherwise,
$h$ is defined as $0$ on $[n]$
and on the last $3 \log n$ coordinates $h$ is defined as $\psi(Y)$,
where $\psi$ is a bijection from $[n]^3$ to $\{0,1\}^{[3 \log n]}$.
First, the image of $H$ has VC dimension $3 \log n$
since the last $3 \log n$ coordinates are shattered by it.
Second, the map $H$ is a PAC learner for $C$. 
Indeed, let $\mu$ be a distribution on $X$.
If $\mu([n]) \geq 2/3$ then the error of $h$ is always smaller than $1/3$.
If $\mu([n]) < 2/3$ then the only case that
$h$ has positive error is that $Y \subseteq [n]$,
which happens with probability $(2/3)^3 < 1/3$.

A variation of the question above is:
Does every multi-labeled class $C$
have a learner $H$ that makes a nearly optimal number of samples
with an image that is not much more complicated than $C$?

The answer for binary-labeled classes is affirmative;
$C$ has a nearly optimal proper learner.
Danieli and Shalev-Schwartz~\cite{DBLP:conf/colt/DanielyS14}
showed that there are multi-labeled concept classes
that are PAC learnable with $O(1)$ samples
but are not properly PAC learnable with $O(1)$ samples.
In their example, however, the image of $H$ has just one more concept than $C$.
This question therefore remains open.

\section*{Acknowledgements}

We thank Amir Shpilka and Avi Wigderson for helpful discussions.
We also thank Ben Lee Volk and Manfred Warmuth for comments on an earlier version of this text.

\bibliographystyle{plain} 

\bibliography{compRef}

\end{document}